\Crefname{equation}{Eq.}{Eqs.}
\Crefname{figure}{Fig.}{Figs.}
\Crefname{tabular}{Tab.}{Tabs.}
\Crefname{definition}{Def.}{Defs.}
\Crefname{section}{Sec.}{Sects.}
\Crefname{theorem}{Thm.}{Thms.}
\Crefname{condition}{Cond.}{Conds.}
\pgfplotsset{compat=1.15}
\tikzstyle{block} = [draw, rectangle, minimum height=2em, minimum width=3em,thick]
\tikzstyle{blockdot} = [block, dotted,rounded corners=4, inner sep=-2pt]
\tikzstyle{blockfill} = [block,rounded corners=4, inner sep=-2pt,fill=blue!5!white]
\tikzstyle{every node}=[font=\footnotesize]
\let\MYcaption\@makecaption
\let\@makecaption\MYcaption
\newtheorem{theorem}{Theorem}
\newtheorem{proposition}[theorem]{Proposition}
\newtheorem{lemma}[theorem]{Lemma}
\newtheorem{definition}[theorem]{Definition}
\newcommand{\R}{\mathcal{R}}
\newcommand{\bigO}{\mathcal{O}}
\newcommand{\X}{\mathcal{X}}
\newcommand{\rank}{\mathrm{rank}}
\newcommand{\rulebook}{\langle \R, \preceq_\R \rangle}
\newcommand{\latoinfty}{\la\to+\infty}
\newcommand{\e}{\epsilon}
\newcommand{\la}{\lambda}
\newacronym{av}{AV}{Autonomous Vehicle}
\newacronym{ibr}{IBR}{iterated-best-response}
\newacronym{nash}{NE}{Nash Equilibria}
\newacronym{cudg}{CUDG}{Communal Urban Driving Game}
\newacronym{cudgs}{CUDGs}{Communal Urban Driving Games}
\newacronym{gnep}{GNEP}{Generalized Nash Equilibrium Problem}
\newacronym{abk:av}{AV}{Autonomous Vehicle}
\newacronym{abk:br}{BR}{Best Response}
\newacronym{vru}{VRU}{Vulnerable Road Users}
\definecolor{baiocchi}{RGB}{193,221,245}
\newcommand{\reals}{\mathbb{R}}
\begin{document}
\setboolean{proofs}{true}
\bstctlcite{IEEEexample:BSTcontrol}
\title{\LARGE \bf
    Optimization of Rulebooks via Asymptotically Representing Lexicographic Hierarchies for Autonomous Vehicles
}

\author{Matteo Penlington, Alessandro Zanardi, Emilio Frazzoli%
\thanks{*This work was not supported by any organization}%
\thanks{
M. Penlington, A. Zanardi, and E. Frazzoli are with the Institute for Dynamic Systems and Control, ETH Z\"urich, Switzerland {\tt mpenlington@ethz.ch}.}%
}

\maketitle
\begin{abstract}
A key challenge in autonomous driving is that Autonomous Vehicles (AVs) must contend with multiple, often conflicting, planning requirements. 
These requirements naturally form in a hierarchy -- e.g., avoiding a collision is more important than maintaining lane. 
While the exact structure of this hierarchy remains unknown, to progress towards ensuring that AVs satisfy pre-determined behavior specifications, it is crucial to develop approaches that systematically account for it. 

Motivated by lexicographic behavior specification in AVs, this work addresses a lexicographic multi-objective motion planning problem, where each objective is incomparably more important than the next -- consider that avoiding a collision is incomparably more important than a lane change violation. 

This work ties together two elements. Firstly, a multi-objective candidate function that asymptotically represents lexicographic orders is introduced. Unlike existing multi-objective cost function formulations, this approach assures that returned solutions asymptotically align with the lexicographic behavior specification. Secondly, inspired by continuation methods, we propose two algorithms that asymptotically approach minimum rank decisions -- i.e., decisions that satisfy the highest number of important rules possible. Through a couple practical examples, we showcase that the proposed candidate function asymptotically represents the lexicographic hierarchy, and that both proposed algorithms return minimum rank decisions, even when other approaches do not.

\end{abstract}
\IEEEpeerreviewmaketitle
\section{Introduction}
The objective of an \gls{av} is to reach its destination safely, while adhering to road regulations, cultural norms and should the situation arise, contend with ethical dilemmas~\cite{DeFreitas2021FromVehiclesb}. However, this inherently represents a fundamentally ill-posed, multi-objective problem. Traffic rules defined by regulators are often ambiguous, leaving human drivers to contend with conflicting regulations~\cite{Yu2024OnlineVehicles}. The decision-making challenge is further complicated by the fact that each individual has a unique interpretation of safe driving, resulting in varied responses to identical scenarios~\cite{Craig2021ShouldBehaviors}. Consequentially, developers have the complex task of designing motion planning algorithms that perform \emph{unspecified}, but yet safe, behaviors. 

Although there is no unified formal definition (yet) of \emph{what it means to drive safely}, at its core, solving the multi-objective motion planning problem largely depends on the competency to contend solutions within a hierarchical framework~\cite{Malik2023HowDecide}. \gls{av} decision-making has been previously characterized as lexicographic~\cite{Wang2020EthicalController} -- for instance, avoiding a collision is incomparably more important than violating lane-change rules. Thus, this work considers a multi-objective problem where there exists a lexicographic hierarchy over the constraints and objectives.

Existing approaches, such as preemptive lexicographic optimization, are local and impractical in practice~\cite{Abernethy2024LexicographicStability}. While other methods that embed multi-objective cost functions into planners~\cite{Hu2021Multi-objectiveNetwork,Wilde2024ScalarizingMaximization} fail to properly reflect the lexicographic order, thus returning solutions that do not align with the intended behavior specification. The rulebooks formalism~\cite{Censi2019LiabilityRulebooks} sought to decouple trajectory generation and scoring. This approach ensures that out of the generated trajectories, the lexicographic minimum is returned. However, by construction, generated trajectories are not necessarily the lexicographic optimums. Thus, this works reconnects the generation and scoring by providing a cost function that asymptotically represents the lexicographic preferences and can be optimized over with well-known optimization techniques in continuous space. 

\subsection{Related work}
We first highlight that in general, motion planning approaches rely on a scalar cost function. Depending on the approach, this may be a single objective shortest-path cost such as in~\cite{Karaman2011Sampling-basedPlanning,Xiang2022CombinedRobot}, or a multi-objective cost~\cite{Young2023EnhancingStrategies,Hu2021Multi-objectiveNetwork}. In the former case, the constraint hierarchy must be embedded into the solving approach. Works have illustrated if feasible solutions exist, optimal ones are returned, but are unable to solve the problem if infeasible~\cite{Orthey2021SparsePlanning, Schulman2014MotionChecking}. In the case of multi-objective cost functions, a common approach is to formulate the multi-objective cost as a weighted sum over the objectives~\cite{Wilde2024ScalarizingMaximization}. However, by construction, this formulation allows for trade-offs between the lexicographic constraints. Data-driven approaches, such as inverse reinforcement learning~\cite{Hu2019TrajectoryDemonstrations} and reinforcement learning~\cite{Xu2022Multi-objectiveLearning}, seek to directly regress an appropriate reward function and policy respectively, while other works approach the problem more analytically, such as through weighted maximisation~\cite{Wilde2024ScalarizingMaximization} and rule hierarchies~\cite{Veer2022RecedingVehicles}. However, in these works, the lexicographic hierarchy is not explicitly presented in the eventual cost function, thus returning solutions that may be appropriate for general driving, but lack assurance that returned solutions are minimum violating. 
Thus, in this work, a formulation is proposed that analytically defines an arbitrary cost function that asymptotically ensures the lexicographic hierarchy is preserved. 
 
Secondly, we highlight that lexicographic optimization solvers exist in literature. These approaches ensure that returned solutions satisfy the hierarchy~\cite{Lai2023PureArt}. Typically, these approaches adopt the preemptive method~\cite{Sherali1983PreemptiveCounterexamples}, where a problem with $N$ lexicographic objectives is solved through $N$ sequential optimization problems in decreasing order of importance. Fundamentally, even in scenarios where feasible solutions exist, it is of high complexity~\cite{Abernethy2024LexicographicStability}, and thus has not been deployed within the domain of motion planning. To address this, works have proposed the use of the grossone to represent infinitesimals, and thus are able to formulate and solve a scalar multi-objective cost function as one optimization problem. However, thus far, their algorithmic capabilities are limited to linear and mixed-integer programs on infinity computers~\cite{Cococcioni2018LexicographicAlgorithm, Cococcioni2022Multi-objectiveApproach}. Thus, building on the proposed multi-objective cost function, we propose two algorithms that solve a non-linear non-convex optimisation problem, asymptotically returning solutions that satisfy the lexicographic hierarchy. 

Finally, we briefly introduce the rulebooks formalism~\cite{Censi2019LiabilityRulebooks}. Rulebooks was introduced as a language specification, thus separating trajectory generation and evaluation. However, to ensure that generated trajectories are minimum violating, the planner must explicitly consider the hierarchy while solving the motion planning problem. Although previous works have focused on integrating rulebooks directly into planners, prior methods did not formulate cost functions that explicitly represents the lexicographic hierarchy of rules~\cite{Xiao2021Rule-basedDriving, Veer2022RecedingVehicles}. Thus, returned trajectories did not necessarily satisfy the hierarchy (and in fact, in~\cref{sec:MPP}, we illustrate this on a common scenario).

\subsection{Contributions}
The main contribution of this work is the introduction of a differentiable candidate function that asymptotically represents a lexicographic hierarchy. It consists of a penalty function, where each rule in the hierarchy has a multiplier associated that increases at different rates during optimization.

Furthermore, we introduce the notion of \textit{minimum-rank} decisions, and show that with the proposed candidate function, stationary points asymptotically approach these decisions. 

To tractably solve for minimum-rank decisions, two continuation-method inspired algorithms are proposed: one algorithm faithfully converges to stationary points at every iteration, and an approximate that leverages time-scale separation to asymptotically approach minimum-rank decisions.

\emph{Manuscript organization:} \cref{sec:preliminaries} introduces the necessary preliminaries, formally defining rulebooks, rank of a decision, and the notion of representability of an ordered set. \cref{sec:asymptotic_representation_of_rulebooks,sec:opt_rulebooks} introduce the core contributions, a candidate function that asymptotically represents lexicographically ordered sets and two algorithms that converge to minimum-rank decisions. These notions are then showcased in~\cref{sec:MPP}. 

\section{Preliminaries}\label{sec:preliminaries}
\noindent
The ensuing formalization builds upon the language specification introduced in rulebooks~\cite{Censi2019LiabilityRulebooks}, and on the related work on \emph{minimum-violation} planning~\cite{Wongpiromsarn2020}.
We first recall the concept of \emph{rules} organized in a hierarchy (\emph{rulebooks}), then introduce the notion of \emph{rank} of a decision as the index of the most important rule that cannot be satisfied, and conclude by recalling the notion of \emph{representability} of an ordered set.  
We assume that the reader is familiar with the concepts of order theory~\cite{Davey2002}.

\begin{definition}[Decision Space] \label{def:decision_space}
    A decision space, $\X$, is a finite-dimensional differentiable space\footnote{A space that admits a notion of differentiation of functions on it.}. A decision is an $x \in \X$.  
\end{definition}
Within the context of robotic motion planning, a decision space could be the configuration space of the robot over a certain horizon. 
In this work, we consider a robot's inputs as the corresponding decisions.
However, note that higher-level decision spaces, such as tactical decisions \{\textit{go left, go straight, go right}\}, could also be used as long as the notion of differentiability remains. 
\begin{definition}[Rule] \label{def:rule}
    A rule is a differentiable function ${r: \X\rightarrow\mathbb{R}_{\geq0}}$. 
    It maps a decision $x\in\X$ to a non-negative real number that represents the degree to which $x$ violates rule $r$. It is said that a decision $x\in\X$ \emph{satisfies} a rule $r$, if its value is 0. 
    Moreover, we consider rules that have stationary points if and only if they are satisfied, $\nabla_x r(x) = \mathbf{0} \iff r(x) = 0$.
\end{definition}
For \glspl{av}, a rule could be any form of regulatory function (e.g., speed limit), cultural behavior (e.g., Pittsburgh left), specified objective (e.g., comfort, lane centering,...). 
More practical examples can be found in~\cite{Collin2020SafetyRulebooks, Karnchanachari2024TowardsDriving}.

To prevent more important rules from being traded-off for less important ones (e.g., collision vs comfort), the \emph{rulebook} formalism was introduced in~\cite{Censi2019LiabilityRulebooks}, organizing rules in a partially ordered set.
In this work, we consider only totally ordered rulebooks, which are the refined ones used in practice for decision making~\cite{Xiao2021Rule-basedDriving}. 
\begin{definition}[Rulebook] \label{def:rulebook}
    A rulebook $\rulebook$ is a total order $\prec_\R$ on the finite set of rules $\R$. 
    Each rule in the rulebook is defined with respect to the same decision space $\X$.
\end{definition}
Notionally, $N$ rules (or combination thereof) are indexed $0,\dots,N-1$ in the hierarchy where the lower indices identify more important rules. 

It should be noted that a rulebook induces a total pre-order over the decision space, denoted by $\langle \X, \prec_\X \rangle$, which in this work, is referred to as the preference structure over the decision space.

Furthermore, we associate a \emph{rank} to each decision as the index of the lowest (most important) rule that is not satisfied.
\begin{definition}[Rank~\cite{Zanardi2022PosetalMetrics}]
    The rank of a decision $x\in\X$ with respect to rulebook $\rulebook$ is
    \begin{equation}
        \mathrm{rank}(x) = \min \left\{i\in\{0,\ldots,N-1\} \mid  r_i(x) \neq 0\right\}.
    \end{equation}
    By convention, if none of the rules are violated, the decision has a rank $N$.
\end{definition}
Since no existing method guarantees convergence to the global minima of a motion planning problem, it is useful to consider the lowest achievable rank over the decision space (i.e., the rank of the global optimum).
\begin{definition}[Minimum Rank Decision]\label{def:rankpres}
    We denote the minimum rank $i^\star$ as:
    $$i^\star = \min \left\{\rank(x)\quad \forall x \in \X  \right\}.$$
\end{definition}

We finally recall the notion of \emph{representability} (i.e., scalarizability) of an ordered set.
\begin{definition}[Representability~\cite{Beardon2002TheRelations}] \label{def:representability}
    An ordered set $\langle \X, \preceq \rangle$ is representable if there exists a utility function $f:\X \rightarrow \mathbb{R}$ such that:
    \begin{equation}
        \begin{aligned}
            \left( x, y \right) \in \prec &\iff \frac{f(x)}{f(y)}<1, \, \forall x, y \in \X \\
            \left( x, y \right) \in \sim &\iff f(x)=f(y), \, \forall x, y \in \X.
        \end{aligned}    
    \end{equation}
    We further introduce the notion that a parametric utility function $f(x,\lambda): \X \times \reals \to \reals_{\geq0}$ \emph{asymptotically represents} an ordered set $\langle \X, \prec_\X \rangle$, if for $\latoinfty$ it holds:
    \begin{equation}
        \begin{aligned}
            \left( x, y \right) \in \prec &\iff \lim_{\latoinfty}\frac{f(x,\la)}{f(y,\la)}<1, \, \forall x, y \in \X \\
            \left( x, y \right) \in \sim &\iff \lim_{\latoinfty}\frac{1+f(x,\la)}{1+f(y,\la)}=1,\, \forall x, y \in \X.
        \end{aligned}    
    \end{equation}
\end{definition}
    
Most importantly, we highlight that not all lexicographic orderings are representable. 
As shown in~\cite{Debreu1954RepresentationFunction}, it is not always possible to construct a utility function representing a lexicographic order, for instance, the set is $\reals^d$ with $d\geq2$. 
Nevertheless, necessary and sufficient conditions under which a lexicographically ordered set is representable have been derived~\cite{Fishburn1970AdditiveSets, Shi2020OnPreferences}. 
Namely, the decision space must contain countable equivalence classes. 
Formally, this could rely on defining $\X\subseteq \mathbb{Q}^d$ instead of using $\reals^d$. However, since $\mathbb{Q}$ is dense in $\reals$, and computers have limited numerical precision, it is argued that this has no practical effect. This implies that for this work we consider that the ordered set of preferences over the decision space, $\langle \X, \prec_\X \rangle$, is representable.

\section{Asymptotic Representation of Rulebooks} \label{sec:asymptotic_representation_of_rulebooks}
We hereby introduce the main contribution of this work. 
A candidate utility function is introduced by adding penalty terms for each rule in the rulebook. Each term is weighted by a ``Lagrange multiplier'' with a different exponent based on the index of the corresponding rule. The key insight of the method is that during the optimization (e.g., gradient descent), we concurrently increase the value of the multiplier--asymptotically to infinity. Since the multiplier of each rule tends to infinity at different rates, more important rules dominate less important rules, allowing us to asymptotically retrieve decisions that are minimum-rank.
The remainder of this section is devoted to formalizing this idea. 
\begin{definition}\label{def:f_function}
    Let $\langle \R, \preceq_\R \rangle $ denote a rulebook (as per~\Cref{def:rulebook}) with $N$ rules and let $\lambda \in \reals_{>0}$. 
    We define the utility function $f(x,\lambda): \X \times \reals_{\geq0} \to \reals_{\geq0}$ as:
    \begin{align}\label{eq:f_function}
        f(x, \la) \coloneqq \sum_{i=0}^{N-1} \lambda^{N-i} r_i(x), \quad x\in\X.
    \end{align}
\end{definition}
Most importantly, \eqref{eq:f_function} has the following property: 
\begin{proposition}\label{pro:f_function}
    The utility function \Cref{def:f_function} asymptotically represents $\langle \X, \prec_\X \rangle$ as $\latoinfty$. 
\end{proposition}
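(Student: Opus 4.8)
The plan is to read off the large-$\la$ behaviour of $f(\cdot,\la)$ directly from the fact that, for fixed $x$, the map $\la\mapsto f(x,\la)$ is a polynomial in $\la$. First I would note that if $k=\rank(x)$ then $r_i(x)=0$ for all $i<k$, so $f(x,\la)=\sum_{i=k}^{N-1}\la^{N-i}r_i(x)$ is a polynomial of degree $N-k$ whose leading coefficient is $r_k(x)$, which is strictly positive when $k<N$ (by \Cref{def:rule} and the definition of rank), while $f(x,\la)\equiv 0$ when $k=N$. Hence, as $\latoinfty$, $f(x,\la)\sim r_k(x)\,\la^{N-k}$, and in particular $f(x,\la)$ either diverges to $+\infty$ or is identically zero; this already guarantees that all the ratios appearing in \Cref{def:representability} have well-defined limits in $[0,+\infty]$, which is a prerequisite for the statement to make sense.

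Next I would unwind the rulebook-induced order: scanning rules from most to least important, $x\prec_\X y$ amounts to $\rank(x)>\rank(y)$, or $\rank(x)=\rank(y)=k$ with $r_k(x)<r_k(y)$; and $x\sim_\X y$ amounts to $\rank(x)=\rank(y)=k$ with $r_k(x)=r_k(y)$ (including $k=N$). Combined with the first step, the forward directions of \Cref{pro:f_function} reduce to a comparison of leading terms of $f(x,\la)$ and $f(y,\la)$: (i) if the ranks differ, the higher-rank decision has the lower-degree polynomial, so $f(x,\la)/f(y,\la)\to 0$ or $+\infty$; (ii) if $\rank(x)=\rank(y)=k<N$, then $f(x,\la)/f(y,\la)\to r_k(x)/r_k(y)\in(0,+\infty)$; (iii) if both ranks equal $N$, both functions are identically zero. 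Matching (i)–(iii) against the characterisation above gives $\lim_{\latoinfty}f(x,\la)/f(y,\la)<1$ precisely when $x\prec_\X y$; the converse direction follows by contraposition, since by totality of $\prec_\X$ a decision that is not strictly preferred falls into case (ii) with $r_k(x)\ge r_k(y)$, case (iii), or the reverse of (i), in all of which the limit is $\ge 1$ or reciprocal-type.

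The main obstacle is the bookkeeping around the degenerate limits in the equivalence clause: when one or both of $f(x,\la),f(y,\la)$ vanish identically or diverge, the bare quotient $f(x,\la)/f(y,\la)$ is a $0/0$ or $\infty/\infty$ indeterminate form, which is exactly why \Cref{def:representability} normalises it as $\tfrac{1+f(x,\la)}{1+f(y,\la)}$. The key auxiliary fact is that $\lim_{\latoinfty}\tfrac{1+f(x,\la)}{1+f(y,\la)}$ equals $\lim_{\latoinfty}\tfrac{f(x,\la)}{f(y,\la)}$ whenever the latter is a finite positive number (case (ii)), equals that same value $0$ or $+\infty$ in the divergent cases (i), and equals $1$ identically when both functions are zero (case (iii)); feeding cases (i)–(iii) through this shows both that $x\sim_\X y$ forces the normalised limit to be $1$ and, conversely, that a normalised limit of $1$ forces equal ranks and equal first-violation magnitudes, i.e. $x\sim_\X y$. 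Everything beyond this is elementary asymptotics of polynomials in $\la$; notably, differentiability of the rules plays no role here and is only needed for the later stationary-point results.
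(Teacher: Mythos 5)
Your leading-term analysis of the polynomials $\la\mapsto f(x,\la)$ is sound, and your treatment of the normalised quotient $\frac{1+f(x,\la)}{1+f(y,\la)}$ for the degenerate $0/0$ case is actually more careful than the paper's. The gap is in your unwinding of the order: you reduce $x\prec_\X y$ to ``$\rank(x)>\rank(y)$, or equal ranks $k$ with $r_k(x)<r_k(y)$,'' but the order induced by a totally ordered rulebook is the full lexicographic order on the violation vectors. The paper's proof compares $x$ and $y$ at $j=\min\{i \mid r_i(x)\neq r_i(y)\}$, the \emph{first index where the violations differ}, and this $j$ can be strictly larger than the common rank. Concretely, with $N=3$, take $r_0(x)=r_0(y)=1$, $r_1(x)=0$, $r_1(y)=5$, $r_2\equiv 0$: then $x\prec_\X y$ (they tie on rule $0$ and $x$ does better on rule $1$), both have rank $0$, and $r_0(x)=r_0(y)$, so your case analysis files this pair under indifference. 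Your argument therefore never certifies the strict-preference clause for such pairs, and indeed it cannot: here $f(x,\la)/f(y,\la)=\la^3/(\la^3+5\la^2)\to 1$, not a limit $<1$. Your proof is complete only on pairs whose first differing rule is at least as important as every commonly violated rule, i.e.\ where the shared prefix of the violation vectors is identically zero.

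It is worth noting that this is exactly the point where the paper's own proof is fragile: it writes $f(z,\la)=\alpha+g(z,\la)$ with $\alpha=\sum_{i<j}\la^{N-i}r_i(z)$ and then cancels $\alpha$ as ``a constant,'' but $\alpha$ is constant only across $z\in\{x,y\}$, not in $\la$; whenever some commonly violated rule has index below $j$, $\alpha$ is a polynomial of strictly higher degree than $g$ and dominates the quotient, which then tends to $1$ exactly as in the example above. So your omission and the paper's cancellation step fail on the same set of pairs. To repair your write-up you should either (a) state and use the correct lexicographic characterisation of $\prec_\X$ via the first differing index $j$ and then explicitly restrict to (or justify) the case where all rules of index $<j$ are satisfied by both decisions, or (b) flag that the strict-preference clause of \Cref{def:representability} only holds under that restriction. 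As you correctly observe, differentiability of the rules plays no role in this proposition.
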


\begin{proof}
    For $f$ to represent the rulebook $\rulebook$, $f$ needs to represent both preference ($x \prec y$) and indifference ($x \sim y$) relations, where $x,y \in \X$.
    For the indifference case, ${x \sim y \implies \lim_{\latoinfty}\frac{1+f(x,\la)}{1+f(y,\la)}=1, \quad \forall x, y \in \X}$ is trivially true since by definition of indifference, ${r(x)=r(y), \forall r \in \R} \implies f(x, \la) = f(y, \la)$. 
    The reverse, $\lim_{\latoinfty}\frac{1+f(x,\la)}{1+f(y,\la)}=1 \implies x \sim y$, is also true since two polynomials are equal if their coefficients are all identical - which only occurs if $x\sim y$.
    
    For the strict preference, suppose $x, y \in \X$ such that $x \prec y$.
    Let $j \coloneqq \min \{i \mid r_i(x) \neq r_i(y), i \in {0, \dots, N-1} \}$ denote the lowest rule index where the rule violations differ. Thus, the sum of all rule violations with indexes below $j$ is equal for both $x, y$. Hence the utility function $f$ can be decomposed as follows:
    \begin{align}
        \begin{alignedat}{3}
            f(z, \la) &= \alpha + g(z, \la), \quad z \in \{x,y\}
        \end{alignedat}
    \end{align}
    Where $\alpha = \sum_{i=0}^{j-1} \lambda^{N-i} r_i(z)$ is a constant for any $z$, and $g(z, \la) = \sum_{i=j}^{N-1} \lambda^{N-i} r_i(z)$.

    Taking the limit of $\frac{g(x, \la)}{g(y, \la)}$ as $\la \rightarrow +\infty$ results in:
    \begin{align}
        \lim_{\la \rightarrow +\infty} \frac{g(x, \la)}{g(y, \la)} = \lim_{\la \rightarrow +\infty} \frac{\sum_{i=j}^{N-1} \lambda^{N-i} r_i(x)}{\sum_{i=j}^{N-1} \lambda^{N-i} r_i(y)} = \frac{r_j(x)}{r_j(y)} \label{eq:limit_condition}
    \end{align}
    Since, $x \prec y \iff \frac{r_j(x)}{r_j(y)} < 1$. Then $ x \prec y \iff \lim_{\latoinfty}\frac{g(x, \la)}{g(y, \la)} < 1$. Then it follows that, as $\alpha$ is constant, $x\prec y \iff \lim_{\latoinfty}\frac{f(x, \la)}{f^(y, \la)}<1$. Thus, \cref{def:f_function} is satisfied and concluding this proof. 
\end{proof}

It is important to note that the proposed utility function, \cref{def:f_function}, asymptotically represents $\langle \X, \prec_\X \rangle$. However, there does not necessarily exist a tractable $\la$ such that \cref{def:representability} is computable. Thus, we assume, and later illustrate (\cref{sec:MPP}), that by construction, there exists a finite $\Tilde{\la}$ such that optimizing across $f(x,\Tilde{\la})$ returns minimum-rank decisions.

\subsection{Asymptotic Stationarity to Minimum-Rank Decisions}

As required by \cref{def:rule}, a rule violation is zero if and only if the derivative is zero, no other assumptions are made on the rules. 
This results in a possibly non-linear and non-convex utility function \eqref{eq:f_function}. However, for any practical relevance, we require the solver to return minimum-rank decisions. 
Hence, to provide a foundation that solvers can leverage for returning minimum-rank decisions, the asymptotic behavior of stationary points on~\eqref{eq:f_function} is described. 

\begin{lemma}[Gradient of the utility function]
\label{lemma:f_derivative}
    Consider the utility function $f(x,\la)$ from \cref{def:f_function}. Furthermore, suppose that an $x\in\X$ has rank $j$. Then, $\nabla_x f(x,\la)$ is given by,
    \begin{align}
        \nabla_x f(x,\la) = \la^{N-j}\nabla r_j(x) + \e
    \end{align}
    where as $\la$ increases, $\frac{\e}{\la^{N-j}\nabla r_j(x)}$ asymptotically tends to 0. 
\end{lemma}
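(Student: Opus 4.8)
The plan is to start directly from the definition of $f$ in \eqref{eq:f_function} and differentiate term by term, exploiting linearity of the gradient:
\[
\nabla_x f(x,\la) = \sum_{i=0}^{N-1} \la^{N-i}\,\nabla r_i(x).
\]
The assumption that $x$ has rank $j$ means, by the definition of rank, that $r_i(x)=0$ for all $i<j$ while $r_j(x)\neq 0$. Combining this with the standing assumption on rules from \Cref{def:rule}, namely $\nabla_x r_i(x)=\mathbf{0} \iff r_i(x)=0$, I get $\nabla r_i(x)=\mathbf{0}$ for every $i<j$. Hence the first $j$ terms of the sum vanish identically, and the sum collapses to
\[
\nabla_x f(x,\la) = \la^{N-j}\nabla r_j(x) + \sum_{i=j+1}^{N-1} \la^{N-i}\,\nabla r_i(x).
\]
I would then simply \emph{define} the error term as $\e \coloneqq \sum_{i=j+1}^{N-1} \la^{N-i}\nabla r_i(x)$, which is exactly what the statement claims.

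The remaining step is the asymptotic claim: $\frac{\e}{\la^{N-j}\nabla r_j(x)}$ tends to $0$ as $\la\to+\infty$. For this I would factor $\la^{N-j}$ out of both $\e$ and the leading term and compare exponents: each summand in $\e$ carries a factor $\la^{N-i}$ with $i\ge j+1$, so $\la^{N-i}/\la^{N-j} = \la^{j-i} = \la^{-(i-j)}$ with $i-j\ge 1$, which $\to 0$. Thus $\e/\la^{N-j}$ is a finite linear combination of vectors $\nabla r_i(x)$ (fixed, independent of $\la$) with coefficients $\la^{-(i-j)}\to 0$, hence $\e/\la^{N-j}\to \mathbf{0}$, while the leading coefficient $\nabla r_j(x)$ is nonzero (since $r_j(x)\neq 0$). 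Interpreting the ratio componentwise — or, more cleanly, as the statement that $\norm{\e}/(\la^{N-j}\norm{\nabla r_j(x)})\to 0$ — gives the result.

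The main (minor) obstacle is purely one of rigor and interpretation rather than difficulty: the quantity ``$\frac{\e}{\la^{N-j}\nabla r_j(x)}$'' is a ratio of vectors, which is not literally well-defined, so I would state precisely which norm-based or componentwise reading is intended and note that $\nabla r_j(x)\neq\mathbf{0}$ guarantees the denominator is genuinely nonvanishing. One should also observe the degenerate edge case $j=N-1$, where the sum defining $\e$ is empty, so $\e=\mathbf{0}$ and the claim holds trivially. Everything else is a one-line application of linearity of $\nabla$, the rank hypothesis, and the rule regularity assumption.
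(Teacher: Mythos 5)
Your proof is correct and matches the argument the paper intends (the paper in fact omits a proof of this lemma, but \cref{the:derivative_0} relies on exactly your decomposition: the terms $i<j$ vanish by the rank hypothesis plus the rule regularity assumption $\nabla_x r_i(x)=\mathbf{0}\iff r_i(x)=0$, and the tail $\e=\sum_{i>j}\la^{N-i}\nabla r_i(x)$ is dominated by the leading $\la^{N-j}$ term). Your added care about the vector-valued ``ratio'' and the empty-sum case $j=N-1$ only improves on what the paper leaves implicit.
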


In the above, it is important to note that for any finite $\la$, the gradient is finite, while as $\la$ increases, the contribution of $\e$ towards $\nabla_x f(x,\la)$ decreases asymptotically. 

\begin{proposition}[Location of stationary points] \label{the:derivative_0}
    Consider the utility function \eqref{eq:f_function} from \cref{def:f_function}.
    Then, for any $x\in\X$ where it holds that
    \begin{equation*}
        \nabla_x f(x,\la) = 0,
    \end{equation*}
    as $\la$ increases, $x$ asymptotically approaches a minimum-rank decision in accordance with \cref{def:rankpres}.
\end{proposition}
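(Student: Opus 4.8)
The plan is to read the stationary point as a $\la$-indexed family $x(\la)$ (this is what makes ``as $\la$ increases'' meaningful) and to show that every subsequential limit of $x(\la)$ as $\latoinfty$ is a minimum-rank decision in the sense of \Cref{def:rankpres}. First I would fix a sequence $\la_k\to+\infty$. Since $\rank(\cdot)$ takes values in the finite set $\{0,\dots,N\}$, pigeonhole yields a subsequence along which $\rank(x(\la_k))\equiv j$ is constant; then, restricting to a bounded sublevel set of $f$ (so that the gradients $\nabla r_i$ admit a common bound $C$ there, using that each $\nabla r_i$ is continuous) and passing to a further subsequence, I would also assume $x(\la_k)\to x^\star$ for some $x^\star\in\X$.

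The engine of the proof is \Cref{lemma:f_derivative}. Because $\rank(x(\la_k))=j$, every more important rule is satisfied, so $\nabla r_i(x(\la_k))=\mathbf{0}$ for $i<j$ by \Cref{def:rule}; hence $\mathbf{0}=\nabla_x f(x(\la_k),\la_k)=\la_k^{N-j}\nabla r_j(x(\la_k))+\e_k$, where $\e_k=\sum_{i>j}\la_k^{N-i}\nabla r_i(x(\la_k))$ satisfies $\|\e_k\|\le C\,(N-j-1)\,\la_k^{N-j-1}$ for $\la_k\ge 1$, since its highest power of $\la_k$ is $N-j-1$. Dividing by $\la_k^{N-j}$ gives $\|\nabla r_j(x(\la_k))\|\le C(N-j-1)/\la_k\to 0$, so by continuity $\nabla r_j(x^\star)=\mathbf{0}$, and then the rule property of \Cref{def:rule} ($\nabla r=\mathbf{0}\iff r=0$) forces $r_j(x^\star)=0$. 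Since also $r_i(x^\star)=\lim_k r_i(x(\la_k))=0$ for $i<j$, we obtain $r_i(x^\star)=0$ for all $i\le j$, i.e. $\rank(x^\star)\ge j+1$: the limit strictly improves on the rank carried along the sequence.

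To upgrade ``$\rank(x^\star)>j$'' to ``$\rank(x^\star)=i^\star$'' I would run the preceding step as an induction on the rule index: a subsequential limit of stationary points is always forced to satisfy the currently dominant (lowest unsatisfied) rule, so the rank of $x^\star$ cannot stop below the first index that cannot be driven to zero while all more important rules are satisfied — which is exactly the minimum-rank value $i^\star$ of \Cref{def:rankpres}. I expect this final step to be the main obstacle: it requires re-running the compactness/subsequence extraction at each stage while keeping the whole family $x(\la)$ inside a common bounded region so that the gradient bound $C$ is legitimate, and it requires being careful that the conclusion is phrased in terms of subsequential limits, since $x(\la)$ need not converge as a full family. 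The degree counting in $\la$ and the continuity arguments are routine once \Cref{lemma:f_derivative} is in hand.
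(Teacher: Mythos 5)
Your first two paragraphs are correct and are in fact more careful than the paper's own argument. The paper's proof fixes a non--minimum-rank decision $y$, invokes \cref{lemma:f_derivative} to conclude $\nabla_x f(y,\la)\to\la^{N-j}\nabla r_j(y)\neq \mathbf{0}$, and from this \emph{pointwise} statement asserts that stationary points approach the minimum-rank set; it never addresses families of stationary points that move with $\la$. Your subsequence extraction (constant rank $j$ by pigeonhole, convergence to $x^\star$ by compactness), together with the degree count $\norm{\e_k}\le C(N-1-j)\la_k^{N-j-1}$, is a rigorous version of the same dominant-term analysis and correctly yields $\nabla r_j(x^\star)=\mathbf{0}$, hence $r_j(x^\star)=0$ and $\rank(x^\star)\ge j+1$. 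The only caveat there is the boundedness you already flag: $\X$ need not be compact and sublevel sets of $f$ need not be bounded, so some a priori confinement of the stationary family has to be assumed.

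The genuine gap is exactly where you suspect it: the closing ``induction'' is not an argument, and it cannot be completed under the paper's hypotheses. Your one-step conclusion controls only the single rule $r_j$ that is dominant along the sequence; it says nothing about $r_{j+1},\dots,r_{i^\star-1}$ at the limit. The obstruction is cancellation across hierarchy levels: stationarity reads $\la^{N-j}\nabla r_j(x(\la))=-\sum_{i>j}\la^{N-i}\nabla r_i(x(\la))$, which can hold with $\nabla r_j(x(\la))=\Ocal(1/\la)$ balancing a non-vanishing $\nabla r_{j+1}$, pinning the family near a point $x^\star$ with $r_j(x^\star)=0$ but $r_{j+1}(x^\star)>0$ even though a strictly higher rank is achievable elsewhere. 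Concretely, take $N=2$, $\X=\reals^2$, $r_0=x^2$ and $r_1=(x-\psi(y))^2+y^2$ with $\psi(0)=0$ chosen so that $\tfrac{d}{dy}\bigl(\psi(y)^2+y^2\bigr)$ vanishes at some $y_0\neq 0$ (e.g.\ $\psi(y)=y\sqrt{2-y^2}$ near the origin, smoothly extended); both rules satisfy \cref{def:rule}, $i^\star=2$ is attained at the origin, yet the implicit function theorem produces stationary points of $f$ converging to $(0,y_0)$, which has rank $1$. So the induction you propose cannot close without additional structure (convexity of the $r_i$, or an assumption excluding such tangential cancellations), and the paper's own proof does not close it either: what both arguments actually establish is the weaker claim that the most important rule violated along the stationary family becomes satisfied in the limit. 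You were right to identify this as the main obstacle rather than a routine step.
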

\begin{proof}
    Let $P \subseteq \X$ be a set denoting the minimum-rank decisions of $\X$ with a rank $i^*$. Then $C = \X \setminus P$ denotes the set of non minimum-rank decisions. If $C$ is empty, it trivially holds that all $x\in\X$ are minimum-rank. If $C$ is non-empty, by definition, it holds that \[f(x, \la) < f(y,\la), \forall x \in P, \forall y \in C.\] Since $y$ is not minimum-rank, for all $y \in C$, there exists at least one rule $j$, where $j<i^*$, for which $r_j(y) > 0$ (in the case where there are multiple indexes, $j$ denotes the lowest index). Hence, in accordance with Lemma \ref{lemma:f_derivative}, where $j$ is the lowest (most important) violated rule, $\nabla_x f(y,\la) = \la^{N-j}\nabla r_j(y) + \e, \forall y \in C$. As $\la$ increases, the contribution $\e\to0 \implies \nabla_x f(y,\la) \to \la^{N-j}\nabla r_j(y) \neq 0, \forall y$. By noting that stationary points exist as a sum of gradients evaluating to $0$, and that the contribution of gradients for rules $i>j$ diminishes, stationary points then asymptotically approach the minimum-rank set. 
\end{proof}

In the above, it is important to note that stationary points, for finite values of $\la$, may exist distinctly outside the minimum rank set. 
However, the key insight is that the asymptotic behavior, as $\la$ increases, the contribution of $\e$ towards the gradient $\nabla_x f(x,\la)$ decreases, resulting in stationary points existing infinitely close to the boundary.

\section{Optimization of Rulebooks} \label{sec:opt_rulebooks}
In principle, if provided with a tractable $\Tilde{\la}$\footnote{Tractable in this context signifies that $f(x,\Tilde{\la}$ is numerically computable.}, the proposed utility function \eqref{eq:f_function} can be used directly in any motion planner. However, it is not known a priori for what value of $\Tilde{\la}$, \eqref{eq:f_function} satisfies \cref{def:representability}.
Hence, inspired by continuation methods, we formulate an optimization problem,~\cref{eq:prelim_opt}, where the objective function gradually transforms across the optimization~\cite{Allgower1990NumericalMethods}, and propose two algorithms to solve it. Ideally:
\begin{subequations}
\begin{align} 
    \arg\min_{x\in\X} \quad &\sum_{i=0}^{N-1} \lambda^{N-i} r_i(x) \label{eq:prelim_opt} \\
    \text{s.t.} \quad &\la \geq \Tilde{\la}\label{eq:subeq2}
\end{align}
\end{subequations}

The first, \cref{al:exact_solver}, faithfully returns a central path consisting for stationary points for each $\la$ update. This involves, for each operation of \emph{Solve}, converging to a decision before \emph{UpdateLambda} and the following iteration is run. Since stationary points asymptotically approach minimum-rank decisions,~\cref{the:derivative_0}, as long as the selected solver converges to stationary points, so will this algorithm.
\begin{algorithm}
    \caption{(Exact central path)}
    \label{al:exact_solver}
    \begin{algorithmic}[1]
        \State $\kappa = 0, \lambda = \la_0$
        \State $x_\kappa \leftarrow \text{Rand}(\mathcal{D})$

        \While{$\mathsf{notConverged}$}
            \State $x_{\kappa+1}^* = \mathsf{Solve}(\ref{eq:prelim_opt})$
            \State $\lambda_{\kappa + 1} \leftarrow \mathsf{UpdateLambda}(\lambda_\kappa)$
            \State $x_{\kappa+1} \leftarrow x_{\kappa+1}^{*}$
        \EndWhile
    \end{algorithmic}
\end{algorithm}

However, this approach has a greater complexity compared with the lexicographic preemptive method --~\cref{al:exact_solver} is ${\bigO (m\cdot c)}$, whereas the preemptive method is ${\bigO (N\cdot c)}$, where $m > N$ denote the number of iterations and rules respectively, and $c$ the complexity of the solver considering $k$ iterations). Nevertheless, it provides a mechanism to verify the expected asymptotic behaviors, described in \cref{pro:f_function} and \cref{the:derivative_0}.

To address the high-complexity of~\cref{al:exact_solver}, in~\cref{al:pre_l_behavior}, we propose the use of time-scale separation to solve one optimization problem\footnote{Note that how to design the time scales is outside the scope of this work. See~\cite{Goel2017ThinkingTimescales} for an example theoretical framework.}. In this case, $\la$ and the solvers hype-parameters are varied at (potentially) different rates. At most, with a one-to-one time-scaling, where for every \emph{SolverStep}, \emph{UpdateLambda} is run, then~\cref{al:pre_l_behavior} has a complexity of $\bigO (m\cdot \frac{c}{k})$, where $k$ is the number of iterations of the solver in~\cref{al:exact_solver}. By construction it is now possible that~\cref{al:pre_l_behavior} has a lower complexity that the preemptive method. 

Furthermore, unlike~\cref{al:exact_solver}, convergence to stationary points for any $\la$ is not guaranteed, but considering the asymptotic behavior of the stationary points, \cref{the:derivative_0}, solutions will asymptotically approach minimum-rank decisions. We later show that for tractable problems, this results in minimum-rank decisions, but not necessarily an optimal minimum-rank decision(\cref{sec:MPP}). 
\begin{algorithm}
    \caption{(Time-scale separated)}
    \label{al:pre_l_behavior}
    \begin{algorithmic}[1]
        \State $\kappa = 0, \lambda = \la_0$
        \State $x_\kappa \leftarrow \text{Rand}(\mathcal{D})$
        \While{$\mathsf{notConverged}$}
            \State $x_{\kappa+1}^* = \mathsf{SolverStep}(\ref{eq:prelim_opt})$
            \State $\lambda_{\kappa + 1} \leftarrow \mathsf{UpdateLambda}(\lambda_\kappa)$
            \State $x_{\kappa+1}\leftarrow x_{\kappa+1}^{*}$
        \EndWhile
    \end{algorithmic}
\end{algorithm}

In the following section, we demonstrate that for practical examples, \cref{al:exact_solver,al:pre_l_behavior} return minimum-rank decisions. Furthermore, note that we use gradient descent with steepest descent line search to update the step size, and a time scheduler with rates $> 1$ for \emph{UpdateLambda}. With \cref{al:pre_l_behavior}, we demonstrate that with the most aggressive time-scaling (i.e., for every solver iteration we update lambda), the returned solutions are still minimum-rank. 

\section{Practical Examples: Motion Planning} \label{sec:MPP}
In the following section, we validate the following claims:
\begin{enumerate}
    \item~\cref{pro:f_function} asymptotically represents the ordered set $\langle \X, \prec_\X \rangle$.
    \item Both proposed~\cref{al:exact_solver,al:pre_l_behavior} converge to minimum-rank trajectories\footnote{Since trajectories are a by product of the decisions (inputs), trajectories will be used as a synonym for decisions.}.
    \item Compared with existing rulebook-like utility function formulations (e.g., \cite{Veer2022RecedingVehicles}), the proposed formulation is the only one to return minimum-rank trajectories. 
\end{enumerate}

To support these claims, two case studies in the context of motion planning for AVs are presented. The first case study involves a jaywalker stepping onto the road. We begin by verifying claim $1$, demonstrating the asymptotic behavior of~\cref{pro:f_function} (\cref{fig:asymptotic_behavior}). Following this, two scenarios are examined: one where it is not dynamically feasible to stop (\cref{fig:infeasible}), and one where it is (\cref{fig:feasible_figures}). We demonstrate that in both scenarios, regardless of feasibility,~\cref{al:exact_solver,al:pre_l_behavior} return minimum-rank decisions, thus validating claim $2$.

The second case study focuses on a post-overtake scenario (\cref{fig:post_ovetake}). Here, we demonstrate, that compared with the existing rulebook-like utility function formulation in \cite{Veer2022RecedingVehicles} -- henceforth referred to as \textit{Differentially Weighted Sigmoid} (DWS) --, since the proposed candidate function,~\cref{def:f_function}, asymptotically represents the lexicographic preferences over the decision space,~\cref{al:exact_solver,al:pre_l_behavior} return minimum-rank trajectories, even when theirs does not, and hence validating claim $3$.

Both case studies utilize the rulebook in~\cref{tab:planning_rulebook}. For additional implementation details refer to~\cref{sec: implementation_details}. 
\begin{table*}[!t]
    \centering
    \begin{tabular}{ccc}
        \toprule
        \textbf{Index} & \textbf{Rule} & \textbf{Description}\\ 
        \midrule
        0 & \emph{Avoid Collision} &  The kinetic energy transfer between the \gls{av} and pedestrian (including a safety distance). \\
        1 & \emph{Inside Drivable Area} & Extent to which the \gls{av} is within lanes matching travel direction. \\
        2 & \emph{Within Speed Limit} ($50$ km/h)& Extent to which the \gls{av} is travelling above the speed limit. \\
        3 & \emph{Lane Centering} & Extent to how close \gls{av} is to the centre of current lane \\ 
        4 & \emph{Progress Towards Goal} & Distance to actual objective, motivates motions towards goal when possible. \\
        \bottomrule
    \end{tabular}%

    \caption{The rulebook used in both case studies, with rule indexes of $0$ and $4$ indicating the most and least important rules respectively.}
    \label{tab:planning_rulebook}
\end{table*}

\subsection{Case Study 1: Jaywalker}\label{sec:case_study_1}
For the first case study, we consider a scenario in which a jaywalker unexpectedly steps onto the road. Upon noticing the approaching vehicle, the jaywalker freezes in place. We evaluate two distinct situations, one where the AV has sufficient distance to safely come to a complete stop, and another where stopping is infeasible, and consequentially, evasive action must be taken. Furthermore, it should be noted that compared with existing approaches~\cite{Veer2022RecedingVehicles}, we do not warm-start the optimizer and even if the problem is infeasible, converge to the minimum-rank trajectory. 

\begin{figure}[ht]
    \centering
    \begin{subfigure}[b]{0.45\textwidth}
        \centering
        \includegraphics[width=\textwidth]{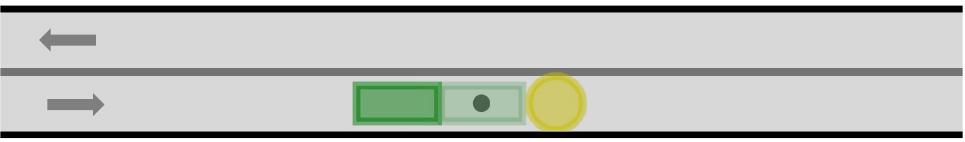}
        \caption{The configuration. The dark green rectangle depicts the initial \gls{av} position, whilst the lighter green depicts its executed trajectory. The yellow object represents the pedestrian, including a safety distance.}
        \label{fig:asymptotic_scenario}
    \end{subfigure}
    \begin{subfigure}[b]{0.23\textwidth}
        \centering
        \includegraphics[width=\textwidth]{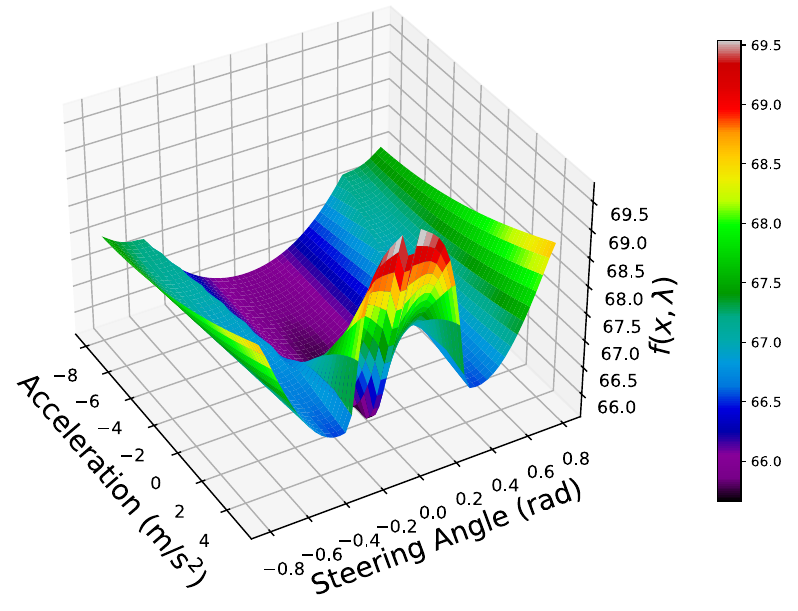}
        \caption{$\la=0.5$.}
        \label{fig:asymptotic_l_0.5}
    \end{subfigure}
    \begin{subfigure}[b]{0.23\textwidth}
        \centering
        \includegraphics[width=\textwidth]{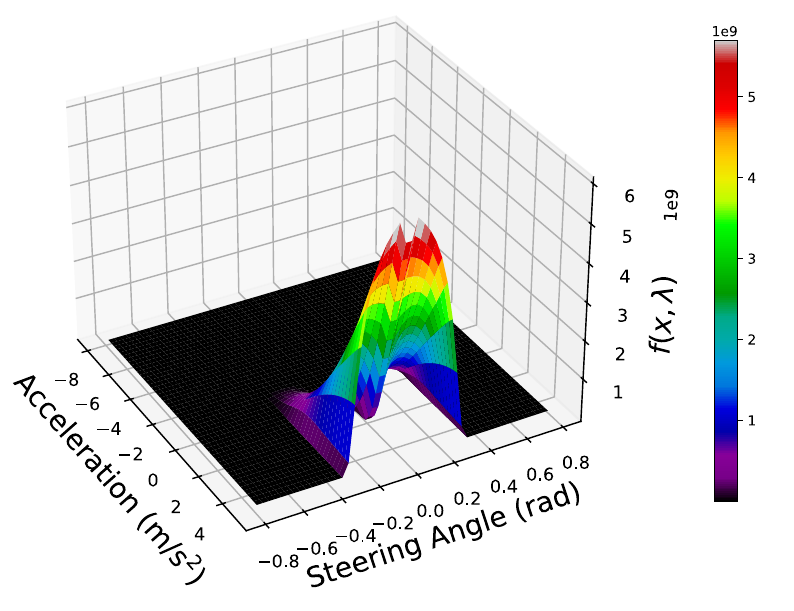}
        \caption{$\la=34$.}
        \label{fig:asymptotic_l_34}
    \end{subfigure}
    \caption{Consider the scenario described by \emph{Case Study 1}. a) depicts the scenario. b) and c) depict the preferences over the decisions from the proposed candidate function (\cref{def:f_function}). Darker colours indicate more preferred decisions, while lighter indicates less preferred decisions. For illustration purposes, plots are generated with a horizon of 0.5s, one time step and with an initial velocity of $\SI{5}{\meter\per\second}$. }
    \label{fig:asymptotic_behavior}
\end{figure}
First we illustrate the asymptotic behavior of~\cref{pro:f_function}. Consider the start configuration depicted in~\cref{fig:asymptotic_scenario}. For small values of $\la$, trade-offs between rules are present (\cref{fig:asymptotic_l_0.5} - consider the \textcolor{blue}{dark blue} level set, minor violations of \emph{LaneCentering} and \emph{AvoidCollision} have the same preference, i.e., value). However, as $\la$ increases, more important decisions dominate over the preference space (\cref{fig:asymptotic_l_34}). Thus, validating claim $1$.
We now consider the two scenarios and validate the convergence of~\cref{al:exact_solver,al:pre_l_behavior}.
\begin{figure}
    \centering
    \begin{subfigure}[b]{0.45\textwidth}
        \centering
        \includegraphics[width=\textwidth]{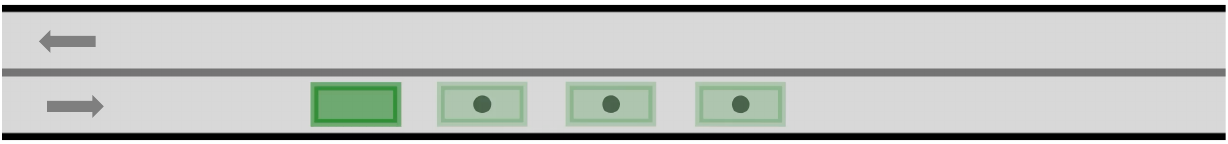}
        \caption{Planned trajectory without pedestrian.}
        \label{fig:infeasible_scenario}
    \end{subfigure}
    \begin{subfigure}[b]{0.45\textwidth}
        \centering
        \includegraphics[width=\textwidth]{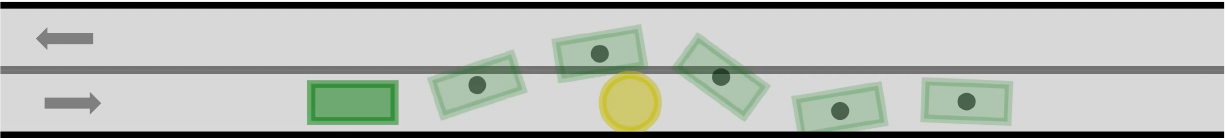}
        \caption{Executed trajectory -~\cref{al:exact_solver}.}
        \label{fig:infeasible_gd_faithful}
    \end{subfigure}
    \begin{subfigure}[b]{0.45\textwidth}
        \centering
        \includegraphics[width=\textwidth]{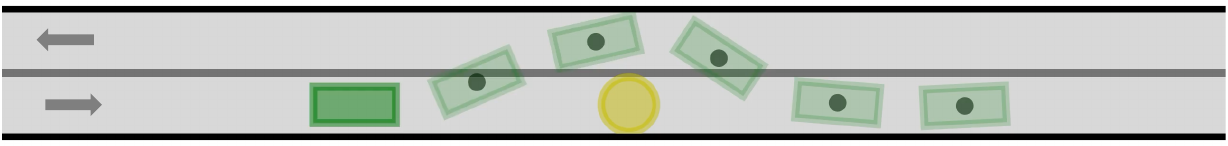}
        \caption{Executed trajectory -~\cref{al:pre_l_behavior}.}
        \label{fig:infeasible_gd_ts}
    \end{subfigure}
    \caption{Case study 1 - infeasible scenario, with an initial velocity of \SI{50}{\kilo\meter\per\hour}. a) depicts the planned trajectory, b) and c) illustrate the executed trajectory once the jaywalker appears, using~\cref{al:exact_solver,al:pre_l_behavior} respectively. }
    \label{fig:infeasible}
\end{figure}
\subsubsection{Infeasible scenario} Consider the scenario depicted in~\cref{fig:infeasible}. Without the presence of obstacles, and given that the \gls{av} is currently in the correct lane and satisfying all the rules apart from $r_4:$ \emph{ProgressTowardsGoal}, the minimum-rank decision (as seen in~\cref{fig:infeasible_scenario}) is to continue straight ahead. 
However, a jaywalker steps onto the road. By construction, it is infeasible to stop the vehicle without colliding with the pedestrian. Consequentially, as per the rulebook (\cref{tab:planning_rulebook}), the minimum-rank trajectory is to ``overtake'' the pedestrian. 
This is because $r_0:$ \emph{AvoidCollision} takes precedence over all other rules (including $r_1:$ \emph{InsideDrivableArea}). Furthermore, since the pedestrian (including the safety distance), takes up the width of the lane, the minimum-rank trajectory is to travel onto the opposing lane. Then once the AV has passed the pedestrian, it returns back to the lane aligned with the direction of travel as it is now possible to satisfy all rules apart from $r_4$ again. As visible in~\cref{fig:infeasible_gd_faithful,fig:infeasible_gd_ts}, the returned trajectory is the minimum-rank trajectory that has just been described. 

\subsubsection{Feasible scenario} Consider the scenario depicted in~\cref{fig:feasible_figures}, the AV, this time equipped with improved pedestrian prediction, is travelling far below the speed limit (\SI{18}{\kilo\meter\per\hour}). Consequentially, it is dynamically feasible for the \gls{av} to come to a stop. Considering the rulebook in~\cref{tab:planning_rulebook}, the minimum-rank decision is thus to stop the vehicle in lane - all rules above $r_4:$ \emph{ProgressTowardsGoal} can be satisfied. As visible in~\cref{fig:feasible_gd_faithful,fig:feasible_gd_ts}, the trajectories are minimum-rank.

\begin{figure}
    \centering
    \begin{subfigure}[b]{0.45\textwidth}
        \centering
        \includegraphics[width=\textwidth]{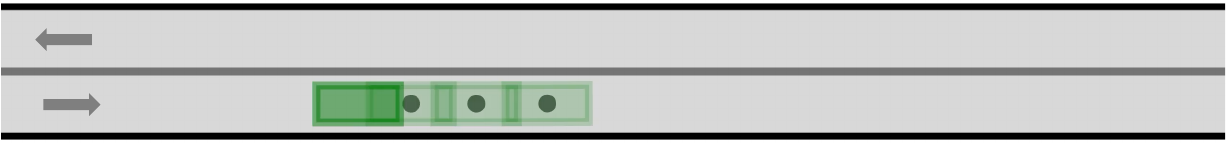}
        \caption{Planned trajectory, without pedestrian.}
        \label{fig:feasible_scenario}
    \end{subfigure}
    \begin{subfigure}[b]{0.45\textwidth}
        \centering
        \includegraphics[width=\textwidth]{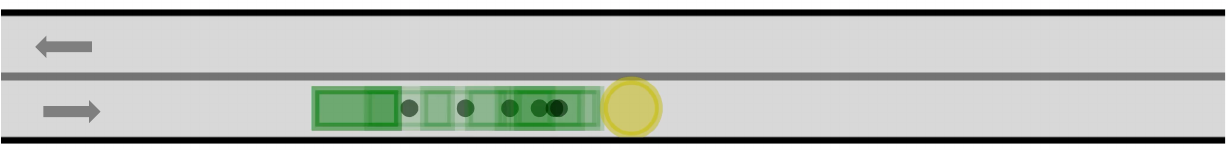}
        \caption{Executed trajectory -~\cref{al:exact_solver}.}
        \label{fig:feasible_gd_faithful}
    \end{subfigure}
    \begin{subfigure}[b]{0.45\textwidth}
        \centering
        \includegraphics[width=\textwidth]{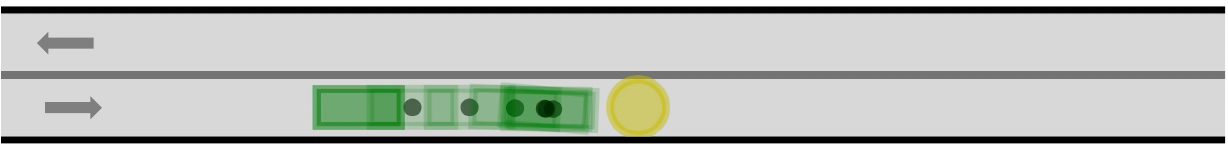}
        \caption{Executed trajectory -~\cref{al:pre_l_behavior}.}
        \label{fig:feasible_gd_ts}
    \end{subfigure}
    \caption{Case study 1 - feasible scenario, initial velocity of \SI{18}{\kilo\meter\per\hour}. a) depicts the planned trajectory, b) and c) illustrate the executed trajectory once the jaywalker appears, using~\cref{al:exact_solver,al:pre_l_behavior} respectively. }
    \label{fig:feasible_figures}
\end{figure}

Note that for both scenarios, ~\cref{al:exact_solver} is more aggressive than~\cref{al:pre_l_behavior}. In both~\cref{fig:infeasible,fig:feasible_figures}, ~\cref{al:exact_solver} places the \gls{av} closer to the pedestrian than~\cref{al:pre_l_behavior}. As discussed in~\cref{sec:opt_rulebooks}, this is an expected behavior, since decisions returned by~\cref{al:exact_solver} satisfy the rulebook \emph{better} than with \cref{al:pre_l_behavior}.

\subsection{Case Study 2: Post-Overtake}
In this subsection, we demonstrate that compared with DWS \cite{Veer2022RecedingVehicles}, the proposed algorithms return minimum-rank trajectories even when DWS does not. 

Consider the slightly altered scenario (\cref{fig:post_ovetake}), where post overtake, the AV is traveling along the lane of opposing traffic. In this scenario, there are no more obstacles to collide with, and as such to satisfy $r_1:$ \emph{InsideDrivableArea}, the minimum-rank trajectory is to move into the lane of ongoing traffic. This does require violating $r_3:$ \emph{LaneCentering}. However, since $r_1$ is lexicographically more important than $r_3$, any violation of $r_3$ should not restrict the travel direction. 
\begin{figure}
    \centering
    \begin{subfigure}[b]{0.45\textwidth}
        \centering
        \includegraphics[width=\textwidth]{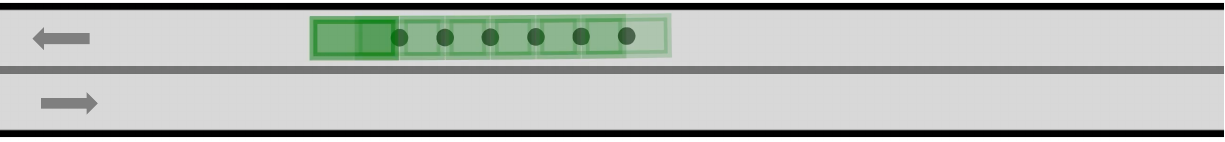}
        \caption{DWS}
        \label{fig:post_overtake_DWS}
    \end{subfigure}
    \begin{subfigure}[b]{0.45\textwidth}
        \centering
        \includegraphics[width=\textwidth]{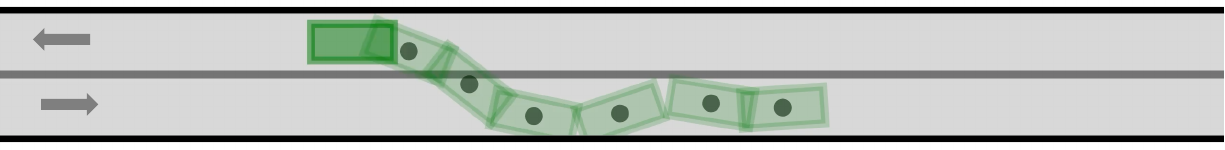}
        \caption{Executed trajectory -~\cref{al:exact_solver}}
        \label{fig:post_overtake_al1}
    \end{subfigure}
    \begin{subfigure}[b]{0.45\textwidth}
        \centering
        \includegraphics[width=\textwidth]{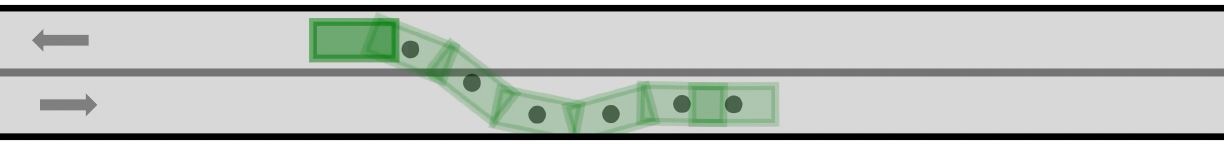}
        \caption{Executed trajectory -~\cref{al:pre_l_behavior}}
        \label{fig:post_overtake_al2}
    \end{subfigure}
    \caption{Case study 2 - post-overtake, the \gls{av} finds itself travelling along the lane of opposing traffic. a), b) and c) compare the executed trajectories for DWS,~\cref{al:exact_solver} and~\cref{al:pre_l_behavior} respectively.}
    \label{fig:post_ovetake}
\end{figure}

However, as visible in~\cref{fig:post_overtake_DWS}, DWS returns a trajectory that results in the \gls{av} to going straight, whereas the proposed algorithms return the minimum-rank trajectory, \cref{fig:post_overtake_al1,fig:post_overtake_al2}. This is since, going straight violates one less rule than moving back onto the original lane, see~\cite{Veer2022RecedingVehicles} for details. Furthermore, for completeness, in \cref{app:dws_proof} we prove that for two trajectories violating the same rules, the DWS formulation does not satisfy~\cref{def:representability}. Thus, claim $3$ is validated.

\subsection{Implementation details} \label{sec: implementation_details}
\subsubsection{Dynamics}

Due to its light-weight and high-accuracy for small time steps, the bicycle model from \cite{Kong2015KinematicDesign} is used to model the dynamics of the vehicle.

\subsubsection{Planning}
The planning problem is formulated in a receding horizon fashion, where for each time step, the optimization problem is solved. For the simulation, a horizon of 1.5 sec with time steps of $0.5$sec is used.

\section{Conclusions and Future Work}
This work introduced a novel approach to optimize over lexicographic preferences. It has introduced a systematic way of building a scalar multi-objective function that asymptotically represents the lexicographic order as the multipliers increase to infinity. Furthermore, taking advantage of~\cref{the:derivative_0}, two algorithms, inspired by continuation methods, are proposed. 
\cref{al:exact_solver} returns decisions that are faithful to the preference structure every iteration but at a high complexity ($\bigO (m\cdot c)$), while \cref{al:pre_l_behavior} uses time-scale separation to solve a single optimization problem, returning decisions at reduced complexity (at most $\bigO \left(m\cdot\frac{c}{k}\right)$). 
Through practical \gls{av} examples, it was demonstrated that the proposed candidate function asymptotically represents the decisions, and both~\cref{al:exact_solver,al:pre_l_behavior} return minimum-rank decisions, even in cases where existing formulations do not.

This work sparks the curiosity for several future developments. Computational challenges and generalizations to arbitrary orders are among the more significant challenges.

Furthermore, computationally speaking, the proposed objective function only asymptotically represents the lexicographic preference. 
However, this may introduce numerical instability and intractability for certain scenarios. 
Thus, motivating research into developing systematic ways of ensuring numerical stability and convergence guarantees.

\section{Appendix} \label{app:dws_proof}
A proof that DWS \cite{Veer2022RecedingVehicles} fails to satisfy \cref{def:representability} is provided below; refer to the paper for notation details.
Note that higher values of \eqref{eq:DWS_reward} indicate preferred decisions -- i.e., $x\prec y \iff R(\rho^{(x)}) > R(\rho^{(y)})$. 

\begin{proposition} \label{pro:DWS_proof}
    The reward function below does not satisfy definition \cref{def:representability}.
    \begin{align}
        R(\rho) = \sum_{i=1}^N \left( a^{N-i+1} \text{step}(c\rho_i) + \frac{1}{N}\rho_i \right) \label{eq:DWS_reward}
    \end{align}
\end{proposition}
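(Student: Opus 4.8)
The plan is to exhibit an explicit counterexample: a pair of decisions $x,y\in\X$ whose violation vectors $\rho^{(x)}=(r_1(x),\dots,r_N(x))$ and $\rho^{(y)}=(r_1(y),\dots,r_N(y))$ activate exactly the same rules (so the discontinuous $\mathrm{step}$ terms in \eqref{eq:DWS_reward} contribute identically), yet for which the additive linear correction $\frac1N\rho_i$ forces $R$ to order the pair in a way inconsistent with the lexicographic preference structure $\langle\X,\prec_\X\rangle$. Recalling the sign convention stated above the appendix proposition — representability here amounts to $x\sim y \iff R(\rho^{(x)})=R(\rho^{(y)})$ and $x\prec y \iff R(\rho^{(x)})>R(\rho^{(y)})$ — it suffices to break either equivalence.

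First I would reduce \eqref{eq:DWS_reward} on the chosen pair. If $r_i(x)>0 \iff r_i(y)>0$ for every $i$, then $\mathrm{step}(c\rho_i^{(x)})=\mathrm{step}(c\rho_i^{(y)})$ for all $i$ (both equal $\mathrm{step}(0)$ or $\mathrm{step}$ of a positive argument; here I use that $\mathrm{step}$ is constant on $(0,\infty)$ — for a sigmoidal surrogate the residual discrepancy can be made negligible against the linear part by the same $c\to\infty$ argument used in~\cite{Veer2022RecedingVehicles}). Hence $R(\rho^{(x)})-R(\rho^{(y)})=\frac1N\sum_{i=1}^N\bigl(r_i(x)-r_i(y)\bigr)$, a quantity depending only on the \emph{sum} of violations, blind to their lexicographic arrangement.

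Next I would pick the numbers (take $N=2$ for concreteness; nothing depends on $N$). For the indifference failure, let $\rho^{(x)}=(\epsilon_1,\epsilon_2)$ and $\rho^{(y)}=(\epsilon_2,\epsilon_1)$ with $\epsilon_1\neq\epsilon_2$, both positive: the two $R$-values coincide, so \eqref{eq:DWS_reward} declares $x\sim y$, whereas $r_1(x)=\epsilon_1\neq\epsilon_2=r_1(y)$ gives $x\not\sim y$, contradicting \cref{def:representability}. For the stronger strict-preference failure, take $\rho^{(x)}=(1,10)$ and $\rho^{(y)}=(\tfrac12,1)$: since the $\mathrm{step}$ contributions cancel, $R(\rho^{(x)})-R(\rho^{(y)})=\frac12\bigl((1+10)-(\tfrac12+1)\bigr)>0$, so \eqref{eq:DWS_reward} ranks $x$ strictly above $y$ ($x\prec y$), while lexicographically $r_1(x)=1>\tfrac12=r_1(y)$ forces $y\prec x$ — the order is reversed. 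Either pair proves the proposition; I would present the second, as it also mirrors the practical trade-off observed in~\cref{fig:post_ovetake}.

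The main obstacle is one of hygiene rather than mathematics: being precise about the meaning of $\mathrm{step}$ (true Heaviside versus the sigmoidal surrogate actually used), and confirming that decisions realizing the prescribed violation vectors exist in the decision space under consideration — immediate for the generic, mutually independent rules in the case studies, but worth stating so the counterexample is not vacuous. Once those points are fixed, the argument is a one-line subtraction.
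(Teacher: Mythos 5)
Your proof is structurally the same as the paper's: both arguments fix two decisions that violate exactly the same set of rules so that the discontinuous $a^{N-i+1}\,\mathrm{step}(c\rho_i)$ terms cancel, and then observe that the residual linear part reduces to an \emph{unweighted} sum $\tfrac{1}{N}\sum_i \rho_i$ over the violated indices, which a less important rule can dominate, reversing the lexicographic order. Your additional indifference counterexample (permuting the coordinates of the violation vector) is a nice bonus the paper does not include.

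There is, however, one concrete problem with your instantiation: the sign semantics of $\rho$. In the DWS formulation (and in the paper's own proof) $\rho_i$ is a \emph{robustness margin} --- positive when the rule is satisfied, negative when violated, bounded in $[-\tfrac{a}{2},\tfrac{a}{2}]$, with higher values preferred --- not the nonnegative violation value $r_i(x)$ of \cref{def:rule}. This is visible in the paper's proof, which uses $x\prec y \implies \rho_\alpha^{(x)} > \rho_\alpha^{(y)}$ and $\rho_\beta^{(x)} \ll \rho_\beta^{(y)} \in [-\tfrac{a}{2},\tfrac{a}{2}]$. Under your identification $\rho^{(x)}=(r_1(x),\dots,r_N(x))$ the linear term $+\tfrac{1}{N}\rho_i$ is monotone in the \emph{wrong} direction even for a single rule (more violation yields higher reward yields ``more preferred''), so your counterexample with $\rho^{(x)}=(1,10)$, $\rho^{(y)}=(\tfrac12,1)$ would equally ``refute'' reward functions that are perfectly fine; as written it attacks a sign-flipped strawman rather than the genuine defect. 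The fix is mechanical: negate your vectors (e.g.\ $\rho^{(x)}=(-0.1,-0.4)$, $\rho^{(y)}=(-0.2,-0.1)$, both inside the normalization range), after which the only source of the contradiction is the one the paper isolates --- the unweighted sum over commonly violated rules letting the less important rule overturn the more important one. Your caveats about the Heaviside-versus-sigmoid surrogate and the realizability of the prescribed violation vectors are well taken; the paper glosses over both.
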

\begin{proof}
    Let $x,y \in \X$ be two decisions, for which $x \prec y$, and $\rho^{(x)}, \rho^{(y)}$ denote the robustness vectors of $x,y$ respectively. Consider the rulebook, $\rulebook$, suppose that $x,
    y$ both violate two rules $r_\alpha, r_\beta$, where $1 < \alpha < \beta < N $. The reward function, \cref{eq:DWS_reward}, is then decomposed as follows:
    \begin{equation}
    \begin{aligned}
        R\left(\rho^{(z)}\right) &= \sum_{i=1}^{\alpha-1} \left(a^{N-i+1} + \frac{1}{N}\rho_i^{(z)}\right) + \frac{1}{N}\rho_\alpha^{(z)} + \\ &\sum_{i=\alpha+1}^{\beta-1} \left(a^{N-i+1} + \frac{1}{N}\rho_i^{(z)}\right) + \frac{1}{N}\rho_\beta^{(z)} +\\ &\sum_{i=\beta+1}^{N} \left(a^{N-i+1} + \frac{1}{N}\rho_i^{(z)}\right)    \\
        &= D + \frac{1}{N} \left(\rho_\alpha^{(z)} + \rho_\beta^{(z)} \right), 
    \end{aligned}
    \end{equation}
    where $D = A + B + C $ and $z \in \{x,y\}$.
    
    Noting that $x \prec y \implies \rho_\alpha^{(x)} > \rho_\alpha^{(y)}$, in the case that $\rho_\beta^{x} << \rho_\beta^{y} \in [-\frac{a}{2}, \frac{a}{2}]$, it it possible that $\rho_\beta^{(x)} < \rho_\alpha^{(y)} < \rho_\alpha^{(x)} < \rho_\beta^{(y)}$. In such a case, $R\left(\rho^{(x)}\right) < R\left(\rho^{(y)}\right)$, which contradicts \cref{def:representability}. 
\end{proof}

\bibliographystyle{IEEEtran}
\clearpage
\bibliography{references_mendeley}

\end{document}